\documentclass[10pt,twocolumn,letterpaper]{article}

\usepackage{cvpr}
\cvprfinalcopy
\usepackage{amsmath}
\usepackage{amssymb}
\usepackage{helvet}
\usepackage{courier}

\usepackage[utf8]{inputenc} 
\usepackage[T1]{fontenc}    
\usepackage{booktabs}   
\usepackage{amsfonts}       
\usepackage{nicefrac}       
\usepackage{microtype}      
\usepackage[title]{appendix}

\usepackage{times}
\usepackage{xcolor}
\usepackage{bm}
\usepackage{soul}
\usepackage{multirow}
\usepackage{adjustbox}
\usepackage{xspace}
\usepackage{enumitem}
\usepackage{amsthm}
\usepackage{subcaption}

\newcommand{\rr}{\mathbb{R}}
\newcommand{\E}{\bm{\epsilon}}

\newcommand{\A}[1]{\mathbf{A}^{#1}}

\newcommand{\Att}{\mathcal{A}}
\newcommand{\dA}{d_\mathcal{A}}

\newtheorem{theorem}{Theorem}[section]

\newtheorem{remark}[theorem]{Remark}
\newtheorem{claim}[theorem]{Claim}

\usepackage[pagebackref=true,breaklinks=true,colorlinks,bookmarks=false]{hyperref}



\pdfinfo{
/Title (Towards Verifying Robustness Of Neural Networks Against Semantic Perturbations)
/Author (John Doe, Jane Doe)
}

\setcounter{secnumdepth}{2}

\usepackage{authblk}
\makeatletter
\renewcommand\AB@affilsepx{, \protect\Affilfont}
\makeatother

\title{Towards Verifying Robustness of Neural Networks Against \\A Family of Semantic Perturbations}

\author[1]{Jeet Mohapatra}
\author[1]{Tsui-Wei Weng}
\author[2]{Pin-Yu Chen}
\author[2]{Sijia Liu}
\author[1]{Luca Daniel}
\affil[1]{MIT EECS} 
\affil[2]{MIT-IBM Watson AI Lab, IBM Research}

\begin{document}

\maketitle

\frenchspacing

\begin{abstract}
Verifying robustness of neural networks given a specified threat model is a fundamental yet challenging task. While current verification methods mainly focus on the $\ell_p$-norm threat model of the input instances, robustness verification against semantic adversarial attacks inducing large $\ell_p$-norm perturbations, such as color shifting and lighting adjustment, are beyond their capacity. To bridge this gap, we propose \textit{Semantify-NN}, a model-agnostic and generic robustness verification approach against semantic perturbations for neural networks. By simply inserting our proposed \textit{semantic perturbation layers} (SP-layers) to the input layer of any given model, \textit{Semantify-NN} is model-agnostic, and any $\ell_p$-norm based verification tools can be used to verify the model robustness against semantic perturbations. We illustrate the principles of designing the SP-layers and provide examples including semantic perturbations to image classification in the space of hue, saturation, lightness, brightness, contrast and rotation, respectively. In addition, an efficient refinement technique is proposed to further significantly improve the semantic certificate. Experiments on various network architectures and different datasets demonstrate the superior verification performance of \textit{Semantify-NN} over $\ell_p$-norm-based verification frameworks that naively convert semantic perturbation to $\ell_p$-norm. 
The results show that \textit{Semantify-NN} can support robustness verification against a wide range of semantic perturbations. Code : \href{https://github.com/JeetMo/Semantify-NN}{\color{blue}{https://github.com/JeetMo/Semantify-NN}}
\end{abstract}

\section{Introduction}
As deep neural networks (DNNs) become prevalent in machine learning and achieve the best performance in many standard benchmarks, their unexpected vulnerability to adversarial examples has spawned a wide spectrum of research disciplines in adversarial robustness, spanning from effective and efficient methods to find adversarial examples for causing model misbehavior (i.e., attacks), to detect adversarial inputs and become attack-resistant (i.e., defenses), and to formally evaluate and quantify the level of vulnerability of well-trained models (i.e., robustness verification~\cite{katz2017reluplex}, certification~\cite{weng2018towards}, or evaluation~\cite{weng2018evaluating}).

Given a data sample $x$ and a trained DNN, the primary goal of verification tools is to provide a ``robustness certificate'' for verifying its properties in a specified threat model. For image classification tasks, the commonly used threat model is an $\ell_p$-norm bounded perturbation to $x$, where $p$ usually takes the value $p\in \{1,2,\infty\}$, approximating the similarity measure of visual perception between $x$ and its perturbed version. The robustness property to be verified is the consistent decision making of the DNN of any sample drawn from an $\ell_p$-norm ball centered at $x$ with radius $\epsilon$. In other words, verification methods aim to verify whether the DNN can give the same top-1 class prediction to all samples in the $\epsilon$-ball centered at $x$. Note that verification is attack-agnostic as it does not incur any attack methods for verification. Moreover, if the $\epsilon$-ball robustness certificate for $x$ is verified, it assures no adversarial attacks using the same threat model can alter the top-1 prediction of the DNN for $x$. Although finding the maximal verifiable $\epsilon$ value (i.e. the minimum distortion) is computationally intractable for DNNs \cite{katz2017reluplex}, recent verification methods have developed efficient means for computing an lower bound on minimum distortion as a verifiable $\epsilon$-ball certificate \cite{kolter2017provable,wong2018scaling,weng2018towards,dvijotham2018dual,zhang2018crown,singh2018fast,wang2018efficient,raghunathan2018semidefinite,Boopathy2019cnncert}. 

Beyond the $\ell_p$-norm bounded threat model, recent works have shown the possibility of generating \textit{semantic} adversarial examples based on semantic perturbation techniques such as color shifting, lighting adjustment and rotation \cite{hosseini2018semantic,liu2018beyond,bhattad2019big,joshi2019semantic,fawzi2015manitest,engstrom2017rotation}. We refer the readers to Figure \ref{fig:veri_semantic} for the illustration of some semantic perturbations for images.
Notably, although semantically similar, these semantic adversarial attacks essentially consider different threat models than $\ell_p$-norm bounded attacks in the RGB space. Therefore, semantic adversarial examples usually incur large $\ell_p$-norm perturbations to the original data sample and thus exceed the verification capacity of $\ell_p$-norm based verification methods. To bridge this gap and with an endeavor to render robustness verification methods more inclusive, we propose \textit{Semantify-NN}, a model-agnostic and generic robustness verification against semantic perturbations. \textit{Semantify-NN} is model-agnostic because it can apply to any given trained model by simply inserting our designed \textit{semantic perturbation layers} (SP-layers). It is also generic since after adding SP-layers, one can apply any $\ell_p$-norm based verification tools for certifying semantic perturbations. In other words, our proposed SP-layers work as a carefully designed converter that transforms semantic threat models to $\ell_p$-norm threat models. As will be evident in the experiments, \textit{Semantify-NN} yields substantial improvement over $\ell_p$-norm based verification methods that directly convert semantic perturbations to the equivalent $\ell_p$ norm perturbations in the RGB space.

Our main contributions   are  summarized as below:
\begin{itemize}[leftmargin=*]
    \item We propose \textit{Semantify-NN}, a model-agnostic and generic robustness verification toolkit for semantic perturbations. \textit{Semantify-NN} can be viewed as a powerful extension module consisting of novel \textit{semantic perturbation layers} (SP-layers) and is compatible to existing $\ell_p$-norm based verification tools. The results show that \textit{Semantify-NN} can support robustness verification against a wide range of semantic perturbations. 
    
    \item We elucidate the design principles of our proposed SP-layers for a variety of semantic attacks, including  hue/saturation/lightness change in color space, brightness and contrast adjustment, rotation, translation and occlusion. We also propose to use input space refinement and splitting methods to further improve the performance of robustness verification. In addition, we illustrate the need and importance of robustness verification for continuously parameterized perturbations.
    
    \item We propose an efficient refinement technique, \textit{input splitting}, that can further tighten the semantic certificate delivered by \textit{Semantify-NN}. Our extensive experiments evaluated on the rich combinations of three datasets (MNIST, CIFAR-10 and GTSRB) and five different network architectures (MLPs and CNNs) corroborate the superior verification performance of \textit{Semantify-NN} over naive $\ell_p$-norm based verification methods. In particular, our method without further refinement can already achieve around 2-3 orders of magnitude larger (tighter) semantic robustness certificate than the baselines that directly uses the same $\ell_p$-norm verification methods to handle semantic perturbations. With our \textit{input splitting} technique, the semantic robustness certificate can be further improved by 100-300\% .   

\end{itemize}

\vspace{-2mm}


\begin{figure}[t]
    \centering
    \includegraphics[width=1\linewidth]{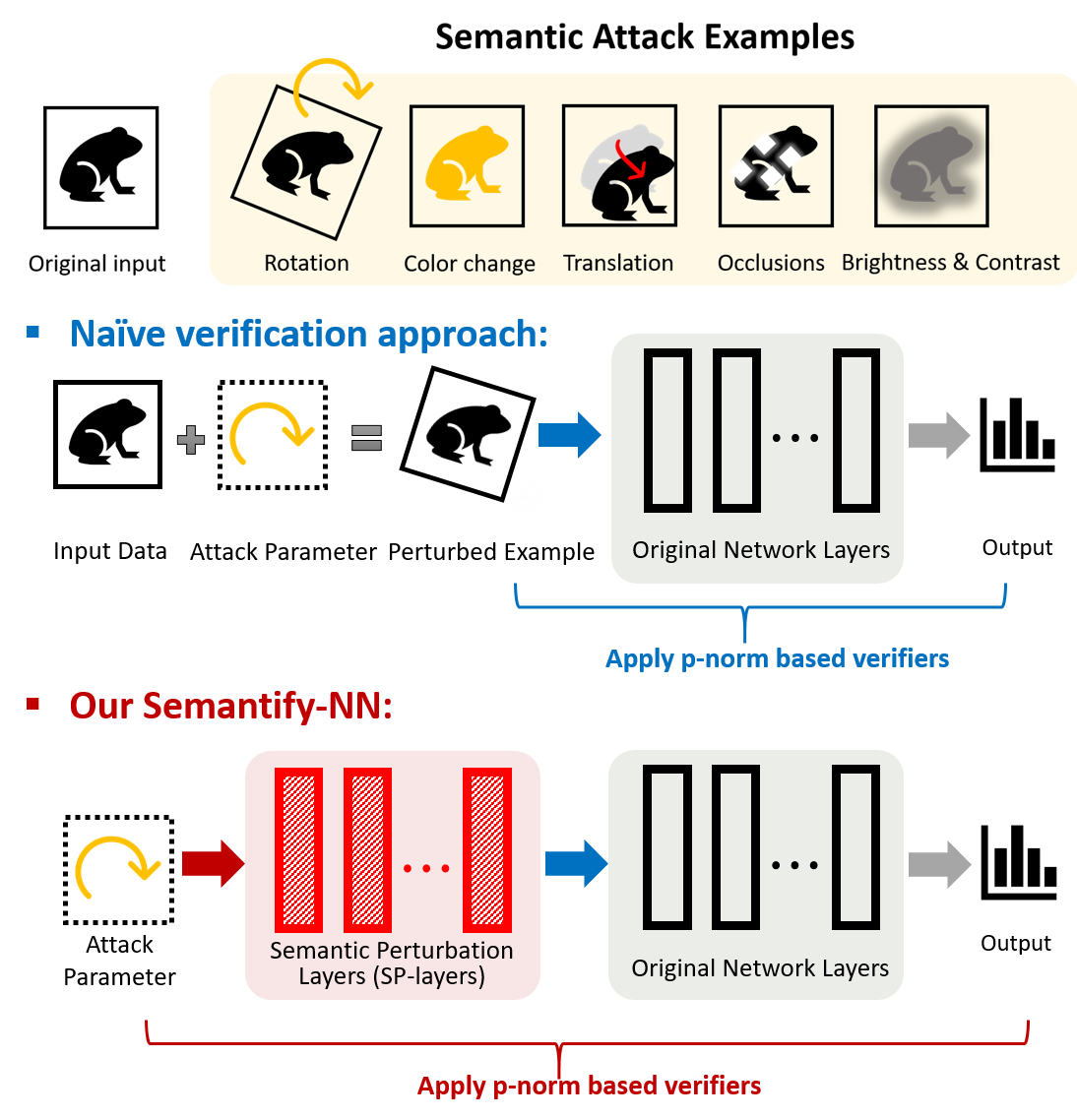}
    \centering
    \caption{Schematic illustration of our proposed \textit{Semantify-NN} robustness verification framework. Given a semantic attack threat model, \textit{Semantify-NN} designs the corresponding semantic perturbation layers (SP-layers) and inserts them to the input layer of the original network for verification. With SP-layers, \textit{Semantify-NN} can use any $\ell_p$-norm based verification method for verifying semantic perturbations.}
    \label{fig:veri_semantic}
\end{figure}

\section{Background and Related Works.}
\paragraph{$\ell_p$-norm based verification} For $\ell_p$-norm bounded threat models, current robustness verification methods are mainly based on solving a convex relaxation problem \cite{kolter2017provable,wong2018scaling,dvijotham2018dual}, devising tractable linear bounds on activation functions and layer propagation \cite{weng2018towards,zhang2018crown,wang2018efficient, singh2018boosting,Boopathy2019cnncert,singh2019abstract}. We refer readers to the prior arts and the references therein for more details. The work in \cite{wang2018efficient} considers brightness and contrast in the \textit{linear transformation} setting, which still falls under the $\ell_p$ norm threat model. The work in \cite{singh2019abstract} has scratched the surface of semantic robustness verification by considering rotations attacks with the $\ell_p$-norm based methods. However, we show that with our carefully designed refinement techniques, the robustness certificate can be significantly improved around 50-100\% in average. Moreover, we consider a more general and challenging setting than \cite{wang2018efficient} where the color space transformation can be \textit{non-linear} and hence directly applying $\ell_p$-norm based method could result in a very loose semantic robustness certificate. 
A recent work~\cite{balunovic2019certifying} considers geometric image transformations (i.e. spatial transformations such as rotation, translation, scaling, shearing), brightness and contrast, and interpolation, and their technique is based on sampling and requires solving a linear programming problem. In contrast, our framework is more general and efficient, because the main idea of our work is to describe non-linear semantic transformations with SP-Layers and apply a novel refinement technique. The idea of SP-Layers enables the use of any state-of-the-art $\ell_p$-norm based robustness verification tools~\cite{kolter2017provable,wong2018scaling,dvijotham2018dual,weng2018towards,zhang2018crown,wang2018efficient, singh2018boosting,Boopathy2019cnncert,singh2019abstract} and our proposed refinement technique has very light computational overhead on the linear-bounding based verifiers~\cite{weng2018towards,zhang2018crown,wang2018efficient, singh2018boosting,Boopathy2019cnncert,singh2019abstract} while obtaining significant performance improvement. The work in \cite{hamdi2019towards} proposes to use semantic maps to evaluate semantic robustness, but how to apply this analysis to develop semantic robustness verification is beyond their scope.

\paragraph{Semantic adversarial attack.}
In general, semantic adversarial attacks craft adversarial examples by tuning a set of parameters governing semantic manipulations of data samples, which are either explicitly specified (e.g. rotation angle) or implicitly learned (e.g. latent representations of generative models). In \cite{hosseini2018semantic}, the HSV (hue, saturation and value) representation of the RGB (red, green and blue) color space is used to find semantic adversarial examples for natural images. To encourage visual similarity, the authors propose to fix the value, minimize the changes in saturation, and fully utilize the hue changes to find semantic adversarial examples. In \cite{liu2018beyond}, the authors present a physically-based differentiable renderer allowing propagating pixel-level gradients to the parametric space of lightness and
geometry. In \cite{bhattad2019big}, the authors introduce texture and colorization to induce semantic perturbation with a large $\ell_p$ norm perturbation to the raw pixel space while remaining visual imperceptibility.
In \cite{joshi2019semantic}, an adversarial network composed of an encoder and a generator conditioned on attributes are trained to find semantic adversarial examples. In \cite{fawzi2015manitest,engstrom2017rotation}, the authors show that simple operations such as image rotation or object translation can result in a notable mis-classification rate. 

\paragraph{Continuously parameterized semantic perturbations cannot be enumerated.} \label{par:cont_sem} We note that for the semantic perturbations that are \textit{continuously parameterized} (such as Hue, Saturation, Lightness, Brightness, Contrast and Rotations that are considered in Section~\ref{sub:dis}), it is \textit{not} possible to enumerate all possible values even if we only perturb one single parameter (e.g. rotation angle). The reason is that these parameters take \textit{real values} in the continuous space, hence it is not possible to finitely enumerate all possible values, unlike its discrete parameterized counterpart (e.g. translations and occlusions have finite enumerations). Take the rotation angle for example, an attacker can try to do a grid search by sweeping rotation angle $\theta$ from $0^\circ$ to $90^\circ$ with a uniform $10^6$ grids. However, if the attacks are not successful at $\theta = 30^\circ$ and $30.00009^\circ$, it does not eliminate the possibility that there could exist some $\theta^\prime$ that could ``fool'' the classifier where $30^\circ< \theta^\prime < 30.00009^\circ$. This is indeed the motivation and necessity to have the robustness verification algorithm for semantic perturbations as proposed in this paper -- with a proper semantic robustness verification algorithm, we can \textit{guarantee} that neural networks will have a consistent prediction on the given image for any $\theta<a$, where $a$ is the semantic robustness certificate (of the image) delivered by our algorithm.

\begin{table*}[t!]
\centering
\caption{Semantic perturbations considered in Section~\ref{sec:theory}. All the listed attacks can be described in Equation~\eqref{eq:attack_para_dist} with $p = \infty$.}
\begin{tabular}{@{}c|cc|ccccc@{}}
\toprule
\multirow{2}{*}{\begin{tabular}[c]{@{}c@{}}Semantic \\ perturbation\end{tabular}} & \multicolumn{2}{c|}{Discretely parameterized} & \multicolumn{5}{c}{Continuously parameterized}                        \\ \cmidrule(l){2-8} 
                                                                                  & Translation            & Occlusion            & Hue      & Saturation & Lightness & Brightness \& Contrast & Rotation \\ \midrule
\# of attack parameter $k$   & 2    & 3   & 1  & 1   & 1     & 2    & 1        \\
SP-Layer    & \multicolumn{2}{c|}{Direct enumeration}   & Yes      & Yes        & Yes       & Yes   & Yes      \\ \bottomrule
\end{tabular}
\end{table*}
\section{Semantify-NN: A Semantic Robustness Verification Framework}
\label{sec:theory}
In this section, we formally introduce \textit{Semantify-NN}, a general robustness verification framework for semantic perturbations. We begin by providing a general problem formulation in Sec.~\ref{sec:formulation} and discuss the semantic perturbations that fall in this family in Sec.~\ref{sec:semantic_perturbation_details}. Next, we elucidate the two key components of Semantify-NN: \textit{SP-Layers} in Sec.~\ref{sec:semantic_perturbation_details} and a novel refinement technique \textit{Implicit Input Splitting} in Sec.~\ref{sub:refine}. These two components are the core of Semantify-NN to enable the use of any $\ell_p$-norm based verifiers and further boost verification performance with light computation overhead compared to existing refinement techniques.   

\subsection{Problem Formulation}
\label{sec:formulation}
Let $\Att$ be a general threat model. For an input data sample $x$, we define an associated space of perturbed images $x'$, denoted as the \textit{Attack Space $\Omega_{\Att}(x)$}, which is equipped with a distance function $\dA$ 
to measure the magnitude of the perturbation. The robustness certification problem under the threat model $\Att$ is formulated as follow: given a trained $K$-class neural network function $f$, input data sample $x$, we aim to find the largest $\delta$ such that 
\begin{align}
\label{eqn_formulation}
    \min_{x' \in \Omega_{\Att}(x), \dA(x', x) \leq \delta} \left( \min_{j \neq c} f_c(x') - f_j(x') \right) > 0,
\end{align}
where $f_j(\cdot)$ denotes the confidence (or logit) of the $j$-th class, $j \in \{1,2,\ldots,K\}$, and $c$ is the predicted class of un-perturbed input data $x$.

\subsection{Semantic Perturbation}
\label{sec:semantic_perturbation_details}
We consider semantic threat models that target semantically meaningful attacks, which are usually beyond the coverage of conventional $\ell_p$-norm bounded threat models in the pixel space. For an attack space $\Omega^k_\mathcal{A}$, there exists a function $g: X \times (I_1 \times I_2 \times \ldots \times I_k) \rightarrow X $ such that
    \begin{align} \label{eq:attack_para_dist}
        &\Omega^{k}_\mathcal{A}(x) = \{ g(x, \epsilon_1, \ldots, \epsilon_k) \mid \epsilon_i \in I_i\}, \\ \nonumber
        &d_{\mathcal{A}}(g(x, \epsilon_1, \ldots, \epsilon_k), x) = ||(\epsilon_1, \ldots, \epsilon_k)||_p, 
    \end{align}
where $X$ is the pixel space (the raw image), $I_i$ denotes a set of feasible semantic operations, and $\|\cdot\|_p$ denotes $\ell_p$ norm. The parameters $\epsilon_i$ specify semantic operations selected from $I_i$. For example, $\epsilon_i$ can describe some human-interpretable characteristic of an image, such as translations shift, rotation angle, etc. For convenience we define $\E^k = (\epsilon_1, \epsilon_2, \ldots, \epsilon_k)$ and $I^k = I_1 \times \ldots \times I_k$ where $k$ denotes the dimension of the semantic attack. In other words, we show that it is possible to define an explicit function $g$ for all the semantic perturbations considered in this work, including translations, occlusions, color space transformations, and rotations, and we then measure the $\ell_p$ norm of the semantic perturbations on the space of semantic features  $\E^k$ rather than the raw pixel space. Notice that the conventional $\ell_p$ norm perturbations on the raw RGB pixels is a special case under this definition: by letting $I_i$ equal to a bounded real set (i.e. $x'_i-x_i$, all possible difference between $i$-th pixel) and $k$ be the dimension of input vector $x$, we recover $\dA = \|x'-x\|_p$. 

Based on the definition above, semantic attacks can be divided into two categories: discretely parameterized perturbations (i.e. $I_k$ is a discrete set) including translation and occlusions and continuously parameterized perturbations (i.e. $I_k$ is a continuous set) including color changes, brightness, contrast, and spatial transformations (e.g. rotations).

\subsubsection{Discretely Parameterised Semantic Perturbation}
\label{sub:dis}

\paragraph{Translation} Translation is a $2$-dimensional semantic attack with the parameters being the relative position of left-uppermost pixel of perturbed image to the original image. Therefore, $I_1 = \{0, 1, \ldots, r\}, I_2 = \{0, 1, \ldots, t\}$ where $r,t$ are the dimensions of width and height of our input image $x$. Note that any padding methods can be applied including padding with the black pixels or boundary pixels, etc. 

\paragraph{Occlusion} Similar to translation, occlusion attack can be expressed by $3$-dimensional attack parameters: the coordinates of the left-uppermost pixel of the occlusion patch and the occlusion patch size\footnote{we use squared patch, but it can be rectangular in general.}. Note that for discretely paramerterised semantic perturbations, provided with sufficient computation resources, one could simply exhaustively enumerate all the possible perturbed images. At the scale of our considered image dimensions, we find that exhaustive enumeration can be accomplished within a reasonable computation time and the generated images can be used for direct verification. In this case, the SP-layers are reduced to enumeration operations given a discretely parameterized semantic attack threat model. Nonetheless, the computation complexity of exhaustive enumeration grows combinatorially when considering a joint attack threat model consisting of multiple types of discretely parameterized semantic attacks.

\subsubsection{Continuously Parameterized Semantic Perturbation}
\label{sub:cont}
Most of the semantic perturbations fall under the framework where the parameters are continuous values, i.e., $I^k \subset \rr^k$. We propose the idea of adding \textit{semantic perturbation layers} (SP-layers) to the input layer of any given neural network model for efficient robustness verification, as illustrated in Figure \ref{fig:veri_semantic}. By letting $g_x(\E^k) = g(x, \E^k)$, the verification problem for neural network $f$ formulated in \eqref{eqn_formulation} becomes
\begin{equation}
   \min_{\E^k \in I^k, \dA(g_x(\E^k) ,x) \leq \delta} \left(\min_{j \neq c} f_c(g_x(\E^k)) - f_j(g_x(\E^k)) \right) > 0.  
\end{equation}

If we consider the new network as $f^{sem} = f \circ g_x$, then we have the following problem:
\begin{equation}
    \min_{\E^k \in I^k, ||\E^k||_p \leq \delta} \left(\min_{j \neq c}  f_c^{sem}(\E^k) - f_j^{sem}(\E^k) \right) > 0, 
\end{equation}
which has a similar form to $\ell_p$-norm perturbations but now on the semantic space $I^k$. The proposed SP-layers allow us to explicitly define the dimensionality of our perturbations and put explicit dependence between the manner and the effect of the semantic perturbation on different pixels of the image. In other words, one can view our proposed SP-layers as a parameterized input transformation function from the semantic space to RGB space and $g(x, \E^k)$ is the perturbed input in the RGB space which is a function of perturbations in the semantic space. Our key idea is to express $g$ in terms of commonly-used activation functions and thus $g$ is in the form of neural network and can be easily incorporated into the original neural network classifier $f$. Note that $g$ can be arbitrarily complicated to allow for general transformations for SP-layers; nevertheless, it does not result in any difficulties to apply the conventional $\ell_p$-norm based methods such as \cite{zhang2018crown,wang2018efficient,singh2018fast,Boopathy2019cnncert,weng2018towards}, as we only require the activation functions to have custom linear bounds and do not need them to be continuous or differentiable. Below we specify the explicit form of SP-layers corresponding to five different semantic perturbations using (i) hue, (ii) saturation, (iii) lightness, (iv) brightness and contrast, and (v) rotation. 

\paragraph{Color space transformation}
We consider color transformations parameterized by the hue, saturation and lightness (HSL space). Unlike RGB values, HSL form a more intuitive basis for understanding the effect of the color transformation as they are semantically meaningful. For each of the basis, we can define the following functions for $g$:
\begin{itemize}[leftmargin=6mm]
    \item \textit{Hue} \quad 
    This dimension corresponds to the position of a color on the color wheel. Two colors with the same hue are generally considered as different shades of a color, like blue and light blue. The hue is represented on a scale of $0$-$360^\circ$ which we have rescaled to the range $[0,6]$ for convenience. Therefore, we have  $g(R,G,B,\epsilon_h) = (d\cdot \phi^h_R(h')+m, d\cdot \phi^h_G(h')+m, d\cdot \phi^h_B(h')+m)$, where $d = (1-|2l-1|)s, m = l - \frac{d}{2}$ and $h' = (h + \epsilon_h)\mod{6}$ are functions of $R,G,B$ independent of $\epsilon_h$ and 
    \begin{equation*}
        \fontsize{9pt}{10pt}
      (\phi^h_R(h'),\phi^h_G(h'),\phi^h_B(h'))  =
        \begin{cases}
          (1, V, 0)& 0 \leq h' \leq 1\\
          (V, 1, 0)& 1 \leq h' \leq 2\\
          (0, 1, V)& 2 \leq h' \leq 3\\
          (0, V, 1)& 3 \leq h' \leq 4\\
          (V, 0, 1)& 4 \leq h' \leq 5\\
          (1, 0, V)& 5 \leq h' \leq 6\\
        \end{cases}     
    \end{equation*}
    where $V = (1 - |(h'\mod{2}) - 1|) $.
    
    For $0 \leq h' \leq 6$ the above can be reduced to the following in the ReLU form ($\sigma_i(x) =$ ReLU$(x-i)$) and hence can be seen as one hidden layer with ReLU activation connecting from hue space to original RGB space:
    \begin{equation}
    \fontsize{9pt}{10pt}
    \begin{split}
        \phi^h_R(h') &= 1 + \sigma_2(h') + \sigma_4(h') - (\sigma_5(h') + \sigma_1(h')) \\
        \phi^h_G(h') &= \sigma_0(h') + \sigma_4(h') - (\sigma_1(h')  + \sigma_3(h')) \\
        \phi^h_B(h') &= \sigma_2(h')  + \sigma_6(h') - (\sigma_5(h') + \sigma_3(h')) \\
    \end{split}
    \end{equation}
    \item \textit{Saturation} \quad
    This corresponds to the colorfulness of the picture. At saturation $0$, we get grey-scale images; while at a saturation of $1$, we see the colors pretty distinctly. We have $g(R,G,B,\epsilon_s) = (d_R\cdot \phi^s(s') + l, d_G\cdot \phi^s(s') + l, d_B\cdot \phi^s(s') + l) $ where $s' = s + \epsilon_s, d_R = \frac{R - l}{s}, d_G = \frac{G - l}{s}$ and $d_B = \frac{B - l}{s}$ are functions of $R,G,B$ independent of $\epsilon_s$ and
    \begin{equation}
        \fontsize{9pt}{10pt}
        \phi^s(s') = \min(\max(s', 0), 1) = \sigma_0(s') - \sigma_1(s')
    \end{equation}
    \item \textit{Lightness} \quad
    This property corresponds to the perceived brightness of the image where a lightness of $1$ gives us white and a lightness of $0$ gives us black images. In this case, 
    $g(R,G,B,\epsilon_l) = (d_R\cdot \phi_1^l(l')+\phi_2^l(l'), d_G\cdot \phi_1^l(l')+\phi_2^l(l'), d_B\cdot \phi_1^l(l')+\phi_2^l(l')) $ where $l' = l+\epsilon_l$, $d_R = \frac{R - l}{1 - |2l -1|},d_G = \frac{G - l}{1 - |2l -1|} $, and $d_B=\frac{B - l}{1 - |2l -1|}$ are functions of $R,G,B$ independent of $\epsilon_l$ and
    \begin{equation}
        \fontsize{9pt}{10pt}
        \begin{split}
            \phi_1^l(l') &= 1 - |2\cdot \min(\max(l',0),1) - 1| \\
            &= -\sigma_0(2\cdot l') - \sigma_2(2\cdot l') + 2\cdot\sigma_1(2\cdot l') +1\\
            \phi_2^l(l') &= \min(\max(l', 0), 1) =  \sigma_0(l') - \sigma_1(l')
        \end{split}
    \end{equation}
\end{itemize}
\paragraph{Brightness and contrast}
We also use the similar technique as HSL color space for multi-parameter transformations such as brightness and contrast: the attack parameters are $\epsilon_b$ for brightness perturbation and $\epsilon_c$ for contrast perturbation, and we have
\begin{align}
    g(x, \epsilon_b, \epsilon_c) &=  \min(\max((1+\epsilon_c)\cdot x + \epsilon_b,0),1) \\
    &= \sigma_0((1+\epsilon_c)\cdot x + \epsilon_b) -  \sigma_1((1+\epsilon_c)\cdot x + \epsilon_b) \nonumber 
\end{align}
Therefore, $g$ can be expressed as one additional ReLU layer before the original network model, which is the proposed SP Layers in Figure~\ref{fig:veri_semantic}. 

\paragraph{Rotation}
We have $1$-dimensional semantic attack parameterized by the rotation angle $\theta$, and we consider rotations at the center of the image with the boundaries being extended to the area outside the image. We use the following interpolation to get the values $x'_{i,j}$ of output pixel at position $(i,j)$ after rotation by $\theta$. Let $i' = i\cos\theta - j\sin\theta, j' = j\cos\theta + i\sin\theta$, then
\begin{equation}
    \fontsize{9pt}{10pt}
   x'_{i,j} = \frac{\sum_{k,l} x_{k,l} \cdot \max(0, 1 - \sqrt{(k-i')^2 + (l-j')^2})}{\sum_{k,l} \max(0, 1 - \sqrt{(k-i')^2 + (l-j')^2})} 
\end{equation}
where $k,l$ range over all possible values. For individual pixels at position $(k,l)$ of the original image, the scaling factors for its influence on the output pixel at position $(i,j)$ is given by the following function:
\begin{equation}
    \fontsize{9pt}{10pt}
 m_{(k,l), (i,j)}(\theta) = \frac{\max(0, 1 - \sqrt{(k-i')^2 + (l-j')^2})}{\sum_{k',l'} \max(0, 1 - \sqrt{(k'-i')^2 + (l'-j')^2})} 
\end{equation}
which is highly non-linear. It is $0$ for most $\theta$ and for a very small range of $\theta$, it takes non-zero values which can go up to $1$. Thus, it makes naive verification infeasible. One idea is to use \textit{Explicit Input Splitting} in Sec. \ref{sub:refine} to split the input the range of $\theta$ into smaller parts and certify all parts, which will give a tighter bound since in smaller ranges the bounds are tighter. However, the required number of splitsin \textit{Explicit Input Splitting} may become too large, making it computationally infeasible. To balance this trade-off, we propose a new refinement technique named as \textit{implicit input splitting} in the following section, which has light computational overhead and helps substantial boost in verification performance.

\subsection{Input Space Refinement for Semantify-NN}
\label{sub:refine}
To better handle highly non-linear functions that might arise from the general activation functions in the SP-layers, we propose two types of input-level refinement strategies. For linear-relaxation based verification methods, the following Theorem holds with the proof in the appendix.
\begin{theorem}\label{thm: Convex hull}
    Given an image $x$, if we can verify that a set $S$ of perturbed images $x'$ is correctly classified for a threat model using one certification cycle, then we can verify that every perturbed image $x'$ in the convex hull of $S$ is also correctly classified, where the convex hull in the pixel space.
\end{theorem}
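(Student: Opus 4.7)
The plan is to exploit the defining feature of linear-relaxation based verifiers: in a single certification cycle, such a method produces an \emph{affine} lower bound on the margin function $m_j(x) = f_c(x) - f_j(x)$, valid over a convex input region $R$ that contains the set to be verified. Concretely, one cycle yields, for each incorrect class $j \neq c$, a function of the form $\underline{m}_j(x) = a_j^\top x + b_j$ with $\underline{m}_j(x) \leq m_j(x)$ for every $x \in R$, together with a lower-bound certificate $\underline{m}_j(x) > 0$ for every $x \in S$. This is the only property of the verifier I would use.

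The first step is to argue that the region of validity $R$ of the affine bound is convex and contains $\mathrm{conv}(S)$. In the $\ell_p$-norm based verifiers cited in the paper (\crownName, \fastlin, \cnncert, etc.), $R$ is an $\ell_p$ ball (or a box), hence convex; since $S \subseteq R$ and $R$ is convex, $\mathrm{conv}(S) \subseteq R$, so the same affine bounds $\underline{m}_j$ remain valid on every point of $\mathrm{conv}(S)$.

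The second step is the linearity argument. For any $x' \in \mathrm{conv}(S)$, write $x' = \sum_{i} \lambda_i x_i$ with $x_i \in S$, $\lambda_i \geq 0$, $\sum_i \lambda_i = 1$. Since $\underline{m}_j$ is affine,
\begin{equation*}
\underline{m}_j(x') \;=\; \sum_i \lambda_i\, \underline{m}_j(x_i) \;>\; 0,
\end{equation*}
because each $\underline{m}_j(x_i) > 0$ from the certificate on $S$ and the weights are nonnegative and not all zero. Combined with the validity of $\underline{m}_j$ on $\mathrm{conv}(S)$ from the first step, this gives $m_j(x') \geq \underline{m}_j(x') > 0$ for every $j \neq c$, which is exactly the robustness condition in Equation~\eqref{eqn_formulation}. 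Hence $x'$ is correctly classified.

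\textbf{Main obstacle.} The substantive step is not the linearity computation (which is essentially one line) but the justification that ``one certification cycle'' truly delivers a \emph{single} affine surrogate valid on a convex enclosing region. For linear-relaxation methods this is immediate from their construction of layer-wise linear upper/lower envelopes composed into a final affine bound; for verifiers that work differently (e.g., LP-based solvers that are re-invoked per point), the statement would need to be re-interpreted or would not apply as cleanly. I would therefore scope the proof explicitly to linear-relaxation based verifiers, as the statement in the paper already does, and cite the standard closed-form affine bound produced by e.g.\ \crownName or \cnncert as the object $\underline{m}_j$ used above.
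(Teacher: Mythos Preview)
Your proof is correct and lands on the same final affinity computation as the paper. The one substantive difference is in how you establish that the affine surrogate $\underline{m}_j$ is valid on $\mathrm{conv}(S)$. You invoke directly that the verifier's operating domain $R$ is a convex set (an $\ell_p$ ball or box) containing $S$, hence containing $\mathrm{conv}(S)$, so the bound is sound there by the verifier's end-to-end guarantee. The paper instead re-derives this from the inside, via a layer-by-layer induction: at each layer it argues that the pre-activation value of any midpoint of two points in $S$ stays within the per-neuron interval $[l_r,u_r]$ used to build the linear relaxation, so the layer's relaxation remains valid for that midpoint; composing across layers gives validity of the final affine output bounds on convex combinations, and then the same linearity step you wrote finishes the job. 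Your route is shorter and perfectly adequate for the cited verifiers, whose soundness on a convex input region is already a known property; the paper's route is more self-contained in that it isolates exactly which internal mechanism of the verifier (interval pre-activation bounds at every layer) drives the conclusion, without appealing to a named ambient region $R$.
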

Here one certification cycle means one pass through the certification algorithm sharing the same linear relaxation values. Although $\ell_p$-norm balls are convex regions in pixel space, other threat models (especially semantic perturbations) usually do not have this property. This in turn poses a big challenge for semantic verification. 
\begin{remark}\label{rem: Necessity}
   For some non-convex attack spaces embedded in high-dimensional pixel spaces, the convex hull of the attack space associated with an image can contain images belonging to a different class (an example of rotation is illustrated in Figure \ref{fig: convex_hull} in the appendix). Thus, one cannot certify large intervals of perturbations using a single certification cycle of linear relaxation based verifiers.
\end{remark}

\paragraph{Explicit Input Splitting}

As we cannot certify large ranges of perturbation simultaneously, input-splitting is essential for verifying semantic  perturbations. It reduces the gap between the linear bounds on activation functions and yields tighter bounds, as illustrated in appendix. We observe that
\begin{equation}
    \begin{split}
        \min_{\E^k \in (I_1^k \cup I_2^k), h(\E^k) \leq \delta} \min_{j \neq c} f_c^{sem}(\E^k) - f_j^{sem}(\E^k)  = \\
        \min_{l \in \{1,2\}} \min_{\E^k \in I_l^k, h(\E^k) \leq \delta}  \min_{j \neq c} f_c^{sem}(\E^k) - f_j^{sem}(\E^k) \nonumber
    \end{split}
\end{equation}

If  ${\small{\min_{\E^k \in I_l^k, h(\E^k) \leq \delta}(\min_{j} f_c^{sem}(\E^k) - f_j^{sem}(\E^k)) > 0 } }$ holds for both parameter dimensions $l = \{1,2\}$, then we have  $ \min_{\E^k \in I_1^k \cup I_2^k, h(\E^k) \leq \delta}(\min_{j} f_c^{sem}(\E^k) - f_j^{sem}(\E^k)) > 0 $. As a result, we can split the original interval into smaller parts and certify each of them separately in order to certify the larger interval. The drawback of this procedure is that the computation time scales linearly with the number of divisions as one has to run the certification for every part. However, for the color space experiments we find that a few partitions are already sufficient for tight certificate.

\paragraph{Implicit Input Splitting}
As a motivating example, in Figure \ref{fig:awesome_image1} in the appendix, we give the form of the activation function for rotation. Even in a small range of rotation angle $\theta$ ($2^\circ$), the function is quite non-linear resulting in very loose linear bounds. As a result, we find that we are unable to get good verification results for datasets like MNIST and CIFAR-10 without increasing the number of partitions to very large values ($\approx 40,000$). This makes verification methods computationally infeasible. We aim at reducing the cost in \textit{explicit splitting} by combining the intermediate bounds used by linear relaxation methods (\cite{raghunathan2018semidefinite}-\cite{zhang2018crown}) to compute the suitable relaxation for the layer-wise non-linear activations. The idea is to use the shared linear bounds among all sub-problems, and hence we only need to construct the matrices (Def 3.3-Cor 3.7 in \cite{weng2018towards}) $\mathbf{A}^{(k)}, \mathbf{T}^{(k)}, \mathbf{H}^{(k)}$ once for all $S$-subproblems instead of having different matrices for each subproblem. This helps to reduce the cost significantly from a factor of $S$ to $1$ when $S$ is large (which is usually the case in order to get good refinement). 

For the implementation, we split the original problem into $S$ subproblem. To derive bounds on the output of a neuron at any given layer $l$, we calculate the pre-activation range for every subproblem. Then we merge the intervals of each neuron among all the subproblems (e.g. set $\bm{u}_{r} = \max_i \bm{u}_{r, \textrm{sub} i}, \bm{l}_{r} = \min_i \bm{l}_{r, \textrm{sub} i}$ in \cite{weng2018towards}) to construct the linear relaxation, $\mathbf{A}^{(k)}, \mathbf{T}^{(k)}, \mathbf{H}^{(k)}$  for the post-activation output of layer $l$. Continuing this procedure till the last layer gives the bounds on the output of the whole neural network.

\paragraph{Remark.} Our experiments demonstrate that, for semantic perturbations, the refinement on the input space of the semantic space can significantly boost up the tightness of robustness certificate, as shown in Section \ref{sec_exp}. Although additional improvement can be made by refining the pre-activation bounds of each layer through solving a Linear Programming problem or Mixed integer optimization problem similar to the works in~\cite{wang2018efficient} and~\cite{singh2018boosting} for the $\ell_p$-norm input perturbation, we observed that our proposed approach with input space refinement has already delivered a certified lower bound (of the minimal successful semantic perturbation) very close to the attack results (which is an upper bound), suggesting that the feature layer refinement will only have minor improvement while at the cost of much larger computation overhead (grows exponentially with number of nodes to be refined).

\begin{table*}[t!]
  \centering
  \captionsetup{justification=centering}
  \caption{Evaluation of averaged bounds on HSL space perturbation. SPL denotes our proposed SP-layers. SPL + Refine refers to certificate obtained after using explicit splitting. Grid search on parameter space is used for attack. The results demonstrate the significance of using SPL layers for certification.}
  \begin{adjustbox}{max width=0.9\textwidth}
    \begin{tabular}{|l|cccc||cc||c|}
    \hline
    
    Network & \multicolumn{4}{c||}{Certified Bounds} & \multicolumn{2}{c||}{Ours Improvement (vs Weighted)}  & Attack  \\
    \hline
          &  Naive & Weighted & \bf SPL & \bf SPL + Refine & w/o refine & w/ refine  & Grid  \\
    \hline 
    \addlinespace[0.1em]
    
    \multicolumn{8}{|l|}{\bf Experiment (I)-A: Hue}\\
    \addlinespace[0.1em]
    \hline 
    CIFAR, MLP 6 $\times$ 2048  & 0.00316 & 0.028 & 0.347   & 0.974 & 11.39x & 51.00x & 1.456   \\
    CIFAR, CNN 5 $\times$ 10    & 0.0067 & 0.046 & 0.395   & 1.794 & 7.58x  & 38.00x & 1.964   \\
    GTSRB, MLP 4 $\times$ 256 & 0.01477	& 0.091	& 0.771	& 2.310 & 8.47x & 22.31x & 2.388 \\ 
    GTSRB MLP 4 $\times$ 256 sem adv & 0.01512 & 0.092 & 0.785 & 2.407 & 8.53x & 26.16x & 2.474 \\
    \hline
    \addlinespace[0.1em]
    \multicolumn{8}{|l|}{\bf Experiment (I)-B: Saturation}\\
    \addlinespace[0.1em]
    \hline 
    CIFAR, MLP 6 $\times$ 2048  & 0.00167 & 0.004 & 0.101   & 0.314 & 24.25x  & 77.50x & 0.342   \\
    CIFAR, CNN 5 $\times$ 10    & 0.00348 & 0.019 & 0.169   & 0.389 & 7.89x   & 19.47x & 0.404   \\
    GTSRB, MLP 4 $\times$ 256 & 0.00951	& 0.020 & 0.38	& 0.435 & 19.00x & 21.75x & 0.444 \\
    GTSRB MLP 4 $\times$ 256 sem adv & 0.00968	& 0.020	& 0.431 & 0.458 & 21.55x & 22.90x & 0.467 \\
    \hline
    
    \addlinespace[0.1em]
    \multicolumn{8}{|l|}{\bf Experiment (I)-C: Lightness}\\
    \addlinespace[0.1em]
    \hline 
    CIFAR, MLP 6 $\times$ 2048  & 0.00043 & 0.001 & 0.047   & 0.244  & 46.00x  & 243.00x   & 0.263  \\
    CIFAR, CNN 5 $\times$ 10    & 0.00096 & 0.002 & 0.080   & 0.273  & 39.00x  & 135.50x   & 0.303  \\
    GTSRB, MLP 4 $\times$ 256 & 0.0025	& 0.005 & 0.134	& 0.332 & 26.80x & 66.40x & 0.365 \\
    GTSRB MLP 4 $\times$ 256 sem adv &0.00268	& 0.005	& 0.148	& 0.362	 & 29.80x & 72.40x & 0.398 \\
    \hline
    \end{tabular}%
    \end{adjustbox}
  \label{tab:hsl}%
\end{table*}

\section{Experiments}
\label{sec_exp}
We conduct extensive experiments for all the continuously-parametrized semantic attack threat models presented in the paper. The verification of discretely-parametrized semantic perturbations can be straightforward using enumeration, as discussed in Section \ref{sub:dis}.
By applying our proposed method, Semantify-NN, one can leverage $L_p$-norm verification algorithms including \cite{weng2018towards,zhang2018crown,wang2018efficient,singh2018fast,Boopathy2019cnncert}. We use verifiers proposed in \cite{zhang2018crown} and \cite{Boopathy2019cnncert} to certify multilayer perceptron (MLP) models and convolutional neural network (CNN) models as they are open-sourced, efficient, and support general activations on MLP and CNN models.   
\begin{itemize}[leftmargin=*]
    \item \textbf{Baselines}. We calculate the upper bound and lower bound for possible value ranges of each pixel $x_i$ of the original image given perturbation magnitude in the semantic space. Then, we use $L_{\infty}$-norm based verifier to perform bisection on the perturbation magnitude and report its value. It is shown that directly converting the perturbation range from semantic space to original RGB space and then apply $L_p$-norm based verifiers give very poor results in all Tables. We also give a weighted-eps version where we allow for different levels of perturbation for different pixels.
    
    \item \textbf{Attack}. We use a grid-search attack with the granularity of the order of the size of the sub-intervals after input splitting. Although this is not the optimal attack value, it is indeed an upper bound for the perturbation. Increasing the granularity would only result in a tighter upper bound and does not affect the lower bound (the certificate we deliver). We would like to highlight again that even though the threat models are very low dimensional, they are continuously parametrized and cannot be certified against by enumeration as discussed in Sec \ref{par:cont_sem}.
    
    \item \textbf{Semantify-NN}: 
    We implement both SP-layers (\textbf{SPL}) and with refinement (\textbf{SPL+Refine}) described in Section~\ref{sub:refine}.
    
\end{itemize}

\noindent \textbf{Implementations, Models and Datasets.} In all of our experiments, we use a custom google cloud instance with 24 vCPUs (Intel Xeon CPU @ 2.30GHz) and 90GB RAM. The SP-layers are added as fully connected layers for MLP's and as modified convolution blocks for CNN models (we allow the filter weights and biases to be different for different neurons). We evaluate Semantify-NN and other methods on MLP and CNN models trained on the MNIST, CIFAR-10 and GTSRB (German Traffic Sign Benchmark datasets). We use the MNIST and CIFAR models released by [6] and their standard test accuracy of MNIST/CIFAR models are $98$-$99$\%/$60$-$70$\%. We train the GTSRB models from scratch to have $94$-$95$\% test accuracies. All CNNs (LeNet) use 3-by-3 convolutions and two max pooling layers and along with filter size specified in the description for two convolution layers each. LeNet uses a similar architecture to LeNet-5~\cite{lecun1998gradient}, with the no-pooling version applying the same convolutions over larger inputs. We also have two kinds of adversarially trained models. The models (denoted as sem adv in the Table) are trained using data augmentation where we add perturbed images (according to the corresponding threat model) to the training data. The models denoted as $l_\infty$ adv are trained using the $L_\infty$ norm adversarial training method \cite{madry2017towards}. We evaluate all methods on 200 random test images and random targeted attacks. We train all models for 50 epochs and tune hyperparameters to optimize validation accuracy.

\begin{table*}[!htbp]
  \centering
  \captionsetup{justification=centering}
  \caption{Evaluation of averaged bounds on rotation space perturbation. SPL denotes our proposed SP-layers. The certified bounds obtained from SPL+Refine are close to the upper bounds from grid attack.}
  \begin{adjustbox}{max width=0.8\textwidth}
    \begin{tabular}{|l|ccc|c||c|}
    \hline
    
    Network & \multicolumn{4}{c||}{Certified Bounds (degrees)}  & Attack (degrees)\\
    \hline
     & \multicolumn{3}{c|}{Number of Implicit Splits}  & \bf SPL + Refine & Grid Attack \\
    \hline
          &  \shortstack{1 implicit \\ No explicit}& \shortstack{5 implicit \\ No explicit} &  \multicolumn{1}{c|}{\shortstack{10 implicit \\ No explicit}}  & \shortstack{100 implicit + \\ \newline explicit intervals of $0.5^\circ$} &\\
    \hline 
    \addlinespace[0.1em]
    \multicolumn{6}{|l|}{\bf Experiment (II): Rotations} \\
    \addlinespace[0.1em]
    \hline 
    MNIST, MLP 2$\times$ 1024   & 0.627 & 1.505 & 2.515 & 46.24  & 51.42 \\
    MNIST, MLP 2$\times$ 1024 $l_\infty$ adv & 1.376 & 2.253 & 2.866 & 45.49  & 46.02 \\
    MNIST, CNN LeNet            & 0.171 & 0.397 & 0.652 & 43.33 & 48.00 \\
    CIFAR, MLP 5 $\times$ 2048  & 0.006 & 0.016 & 0.033 & 14.81 & 37.53 \\
    CIFAR, CNN 5 $\times$ 10    & 0.008 & 0.021 & 0.042 & 10.65  & 30.81 \\
    GTSRB, MLP 4 $\times$ 256   & 0.041 & 0.104 & 0.206 & 31.53  & 33.43 \\
    \hline
    \end{tabular}%
    \end{adjustbox}
  \label{tab: rotation}%
\end{table*}%

\paragraph{Experiment (I): HSL Space Perturbations.}
Table \ref{tab:hsl} demonstrates that using $L_p$-norm based verification results in extremely loose bounds because of the mismatch in the dimensionality of the semantic attack and dimensionality of the induced $L_p$-norm attack. Explicitly introducing this dimensionality constraint by augmenting the neural networks with our proposed  SP-layers gives a significant increase in the maximum certifiable lower bound, resulting in $4-50 \times$ larger bounds. However, there is still an apparent gap between the Semantify-NN's certified lower bound and attack upper bound. Notably, we observe that adding input-space refinements helps us to further tighten the bounds, yielding an extra $1.5-5 \times$ improvement. This corroborates the importance of input splitting for the certification against semantic attacks. The transformations for HSL space attacks are fairly linear, so the gap between our certified lower bound and attack upper bound becomes quite small.

\paragraph{Experiment (II): Rotation}
Table \ref{tab: rotation} shows the results of rotation space verification. Rotation induces a highly non-linear transformation on the pixel space, so we use this to illustrate the use of refinement for certifying such functions. As the transforms are very non-linear, the linear bounds used by our SP-layers are very loose, yielding very small robustness certification. In this case, explicit input splitting is not a computationally-appealing approach as there are a huge amount of intervals to be certified. Table \ref{tab: rotation} shows how using implicit splits can increase the size of certifiable intervals to the point where the total number of intervals needed is manageably big. At this point we use explicit splitting to get tight bounds. For the results in \textbf{SPL + Refine}, we use intervals of size $0.5$ at a time with 100 implicit splits for each interval. The effect on the number of implicit splitting is discussed in Appendix~\ref{app:implicit_splitting}.   

\begin{table}
    \centering
  \captionsetup{justification=centering}
  \caption{Evaluation of averaged bounds and run time on translation space perturbation and occlusions.}
  \begin{adjustbox}{max width=\linewidth}
    \begin{tabular}{|l|cc|cc|}
    \hline
     & \multicolumn{2}{c|}{Translation}  &  \multicolumn{2}{c|}{Occlusion}  \\
    \hline
    Network & \shortstack{Number of  \\ Pixels} &  \shortstack{Runtime \\(sec)} & \shortstack{Filter \\ Size} & \shortstack{Runtime \\(sec)} \\
    \hline 
    \addlinespace[0.1em]
    \multicolumn{5}{|l|}{\bf Experiment (III): Translation \& Occlusion} \\
    \addlinespace[0.1em]
    \hline 
    MNIST, MLP 3$\times$ 1024   & 4.063 & 0.46 & 11.080 & 0.44 \\
    MNIST, MLP 4$\times$ 1024   & 4.130 & 0.51 & 10.778 & 0.55 \\
    MNIST, CNN LeNet            & 4.852 & 0.44 & 10.560 & 0.64 \\
    MNIST, CNN 4 $\times$ 5     & 5.511 & 0.47 & 11.590 & 0.71 \\
    CIFAR, MLP 5 $\times$ 2048  & 3.251 & 1.22 & 3.264 & 1.11 \\
    CIFAR, MLP 6 $\times$ 2048  & 2.755 & 1.36 & 3.300 & 1.28 \\
    CIFAR, CNN 5 $\times$ 10    & 4.532 & 0.86 & 3.373 & 0.66 \\
    GTSRB, MLP 4 $\times$ 256   & 2.785 & 0.39 & 2.652 & 0.14 \\
    \hline
    \addlinespace[0.1em]
    \end{tabular}%
    \end{adjustbox}
  \label{tab: discrete_transform}%
\end{table}%

\paragraph{Experiment (III): Brightness and Contrast}

For multi-dimensional semantic attacks (here a combination attack using both brightness and contrast), we can consider any $L_p$ norm of the parameters to be our distance function. In Figure \ref{fig: barplot_brightness} we show the results for average lower bound for brightness perturbations while fixing the maximum perturbation for contrast parameter ($\epsilon$) to be 0.01, 0.03 and 0.05.

\begin{figure}[ht]
 \centering
 \includegraphics[width=\linewidth]{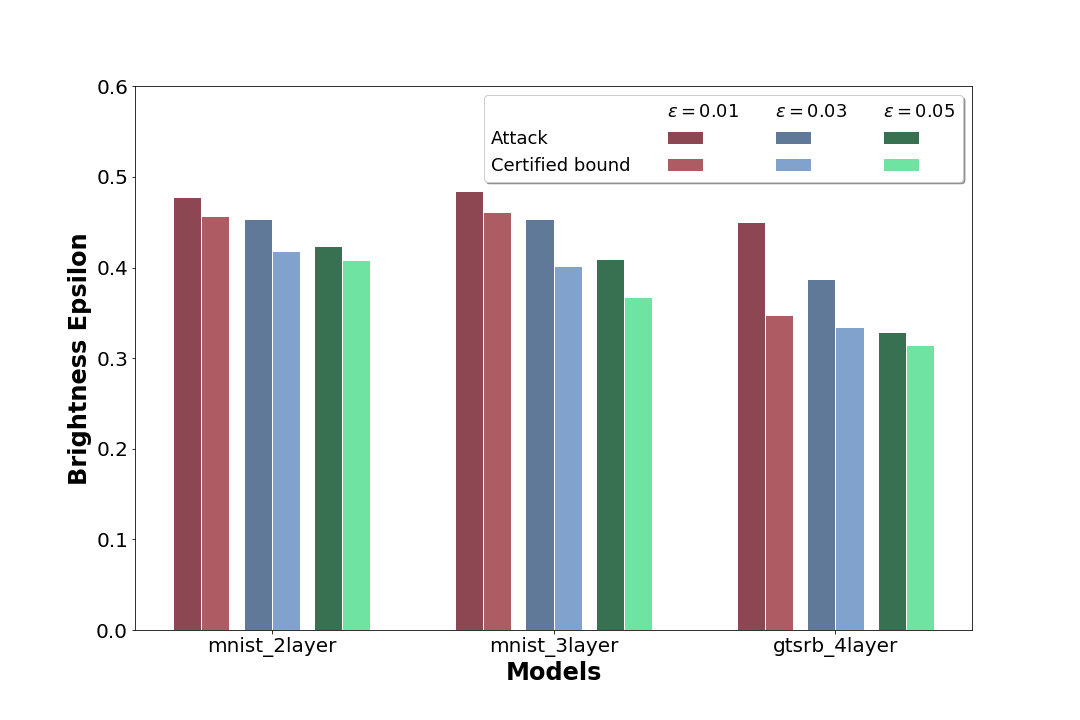}
 \caption{Semantic certification for brightness and contrast.}
 \label{fig: barplot_brightness}
\end{figure}

\paragraph{Experiment (IV): Translation and Occlusion}

Table \ref{tab: discrete_transform} shows that in these experiments, the run-times needed for translation and occlusion to perform exhaustive enumeration are quite affordable. For translation we consider the attack space with $I_1 = \{0, 1, \ldots, r\}$ and $I_2 = \{0, 1,  \ldots, t\}$ where the images are of shape $r \times c$. The reported values are the average $L_2$ norm values for $(\epsilon_1, \epsilon_2)$, the shift vector. For occlusion we consider the attack space with $I_1 = \{0, 1, \ldots, r\}$, $I_2 = \{0, 1,  \ldots, t\}$ and $I_3 = \{1, 2, 3, \ldots, \max\{r,t\}\}$, where the images are of shape $r \times t$. The reported values are the average $L_\infty$ norm values for $\epsilon_3$ (the side-length of the square patch).

\section{Conclusion}
In this paper, we propose \textit{Semantify-NN}, a model-agnostic and generic verification framework for neural network image classifiers against a broad spectrum of semantic perturbations. Semantify-NN exclusively features semantic perturbation layers (SP-layers) to expand the verification power of current verification methods beyond $\ell_p$-norm bounded threat models. Based on a diverse set of semantic attacks, we demonstrate how the SP-layers can be implemented and refined for verification. Evaluated on various datasets, network architectures and semantic attacks, our experiments corroborate the effectiveness of \textit{Semantify-NN} for semantic robustness verification.

{\small
\bibliographystyle{ieee_fullname}
\bibliography{ref}
}

\newpage

\clearpage
\begin{appendices}

\section{Proof of Input Refinement}

\begin{theorem}\label{thm: Convex hull_2}
    If we can verify that a set $S$ of perturbed versions of an image $x$ are correctly classified for a threat model using one \textbf{certification cycle} (one pass through the algorithm sharing the same linear relaxation values), then we can verify that every perturbed image in the convex hull of $S$ is also correctly classified, where we take the convex hull in the pixel space.
\end{theorem}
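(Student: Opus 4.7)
The plan is to exploit the fact that, within a single certification cycle, a linear-relaxation based verifier replaces each nonlinear activation $\sigma$ at layer $k$ by fixed affine upper and lower bounds $\sigma^{U}_{k},\sigma^{L}_{k}$ that are valid over a chosen pre-activation interval $[l^{(k)},u^{(k)}]$. Propagating these bounds by back-substitution (as in CROWN/Fast-Lin/CNN-Cert) yields, at every layer $k$, two affine functions $L^{(k)}(x'),U^{(k)}(x')$ such that $L^{(k)}(x')\le z^{(k)}(x')\le U^{(k)}(x')$ whenever all preceding relaxations are applicable, and ultimately an affine lower bound $M(x')$ on the margin $f_{c}(x')-f_{j}(x')$. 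The whole proof will rest on one elementary fact: if $h$ is affine and $h(x')\ge \alpha$ (resp.\ $\le \beta$) for every $x'$ in a set $S$, then $h(y)\ge \alpha$ (resp.\ $\le \beta$) for every $y\in\mathrm{conv}(S)$, because $h\!\left(\sum_i\lambda_i x'_i\right)=\sum_i\lambda_i h(x'_i)$ when $\sum_i\lambda_i=1$.

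I would then argue by induction on the layer index $k$ the following statement: for every $y\in\mathrm{conv}(S)$, the true pre-activation $z^{(k)}(y)$ satisfies $l^{(k)}\le z^{(k)}(y)\le u^{(k)}$, so the fixed relaxations at layer $k$ remain valid at $y$. The base case $k=1$ is immediate since $z^{(1)}(x')=W^{(1)}x'+b^{(1)}$ is affine in $x'$; interval membership on $S$ transfers verbatim to $\mathrm{conv}(S)$. For the inductive step, assume the relaxations at layers $1,\ldots,k-1$ are valid on $\mathrm{conv}(S)$. Then the back-substituted affine sandwich $L^{(k)}(\cdot)\le z^{(k)}(\cdot)\le U^{(k)}(\cdot)$ holds throughout $\mathrm{conv}(S)$. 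On $S$ we know $l^{(k)}\le L^{(k)}(x')$ and $U^{(k)}(x')\le u^{(k)}$ (otherwise the original cycle would not have been self-consistent); since $L^{(k)},U^{(k)}$ are affine, the elementary fact extends both inequalities to every $y\in\mathrm{conv}(S)$, completing the induction.

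Finally I would apply the same affinity argument to the output bound. The successful certification on $S$ means $M(x')>0$ for every $x'\in S$, where $M$ is the affine lower bound on $f_{c}-f_{j}$ obtained from the cycle. Affinity of $M$ gives $M(y)>0$ on $\mathrm{conv}(S)$, and the induction just established guarantees that $M(y)$ is still a legitimate lower bound on $f_{c}(y)-f_{j}(y)$ at such $y$. Combining, $f_{c}(y)-f_{j}(y)\ge M(y)>0$ for every $y\in\mathrm{conv}(S)$ and every competing class $j\neq c$, which is the desired certification.

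The main obstacle is the inductive step: one must be careful to separate two quantitative ingredients that look similar but are logically distinct, namely (i) the affine sandwich $L^{(k)}\le z^{(k)}\le U^{(k)}$ derived from previously fixed relaxations, and (ii) the constant interval $[l^{(k)},u^{(k)}]$ used to \emph{define} the relaxation at layer $k$. The induction must propagate (i) to $\mathrm{conv}(S)$ first (so that $z^{(k)}$ is trapped between affine functions there), and only then use affinity of those sandwich functions together with (ii) holding on $S$ to conclude that $z^{(k)}(y)\in[l^{(k)},u^{(k)}]$ on $\mathrm{conv}(S)$. Everything else is straightforward algebra of convex combinations.
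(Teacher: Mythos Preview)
Your proposal is correct and follows essentially the same route as the paper: a layer-wise induction showing that the fixed affine relaxations chosen in one certification cycle remain valid on $\mathrm{conv}(S)$, after which affinity of the final output bound transfers the margin positivity from $S$ to its convex hull. Your presentation is in fact a bit tidier than the paper's, which first argues only for midpoints $(y+z)/2$ and then separately handles general convex combinations, whereas you treat $\mathrm{conv}(S)$ uniformly from the start and explicitly separate the affine sandwich $L^{(k)}\le z^{(k)}\le U^{(k)}$ from the constant interval $[l^{(k)},u^{(k)}]$ used to define the next relaxation.
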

\begin{proof}
    When a set $S$ of perturbed inputs and a neural network $\mathrm{f}_{NN}$ are passed into a verifier, it produces $A_L,b_L, A_U, b_U$ such that for all $y \in S$
    \begin{equation}
        A_L\cdot y + b_L \leq \mathrm{f}_{NN}(y)_j\leq A_U \cdot y + b_U
    \end{equation}
\begin{claim}
    We claim that if $y, z \in S$, then $x= \frac{y + z}{2}$ satisfies the above inequality. 
\end{claim}
\begin{proof}
    We can prove this by induction on the layers. For the first layer we see that as matrix multiplication and addition are linear transformations, we have that $x_1 = W_1 \cdot x + b_1$ lies between the points $y_1 = W_1 \cdot y + b_1$ and $z_1 = W_1 \cdot z + b_1$. The important property to note here is that every co-ordinate of $x_1$ lies in the interval between the co-ordinates of $y_1$ and $z _1$. Now, we see that the activation layer is linear relaxed such that $A^1_L\cdot y + B^1_L\leq Act(y) \leq A^1_U \cdot y + B^1_U$ for all values of $y$ between the upper and lower bound for a neuron. As we proved before every pixel of $x$ lies within the bounds and hence satisfies the relation.
    
    For the inductive case, we see that given that $x$ satisfies this relation up till layer $l$, then we have that 
    \begin{equation}
        A^l_L\cdot x + b^l_L \leq \mathrm{f}^l_{NN}(x)_j\leq A^l_U \cdot x + b^l_U
    \end{equation}
    where $\mathrm{f}^l_{NN}(x)_j$ gives the output of the $j^{th}$ neuron in layer $l$ post-activation.
    
    Now, we see that as we satisfy the above equation, the certification procedures ensure that the newly computed pre-activation values satisfy the same condition. So, we have
    $$A^{l+1/2}_L\cdot x + b^{l+1/2}_L \leq \mathrm{f}^{l+1/2}_{NN}(x)_j\leq A^{l+1/2}_U \cdot x + b^{l+1/2}_U$$
    
    where we use $l+1/2$ to denote the fact that this is a pre-activation bound. Now, if we can show that our value lies within the range of the output of every neuron, then we prove the inductive case. But then we see that as these $A^{l+1/2}_L\cdot x + b^{l+1/2}_L$ is a linear transform $x_{l+1/2} = A^{l+1/2}_L\cdot x + b^{l+1/2}_L$ lies between the points $y_{l+1/2} = A^{l+1/2}_L\cdot y + b^{l+1/2}_L$, $z_{l+1/2} = A^{l+1/2}_L\cdot z + b^{l+1/2}_L$. So, we see that the values taken by this is lower bounded by the corresponding value taken by at least one of the points in $S$. Similarly we can prove it for the upper bound. Then, we can use the fact that the linear relaxation gives valid bounds for every values within the upper and lower bound to complete the proof. So, we have that 
    
    \begin{equation}
        A^{l+1}_L\cdot x + b^{l+1}_L \leq \mathrm{f}^{l+1}_{NN}(x)_j\leq A^{l+1}_U \cdot x + b^{l+1}_U
    \end{equation}
    
\end{proof}
    
    Then we see that the verifier only certifies the set $S$ to be correctly classified if for all $y \in S$
        $$ (A^U_j \cdot y + b^U_j) \leq (A^L_c \cdot y + b^L_c)$$
    
    Now, we see that from the equation above that if $z \in conv(S)$, then we have that
    $z = \sum_{i=1}^n a_ix_i$, where $x_i \in S$ and $\sum_{i=1}^n a_i = 1, a_i \geq 0$. Then using the above claim we see that
    \begin{equation*}
        \begin{split}
            (\mathrm{f}_{NN}(z))_j &\leq (A^U_j \cdot z + b^U_j) \\ 
                                &= (A^U_j \cdot  \sum_{i=1}^n (a_i x_i) + b^U_j) \\
                                &= \sum_{i=1}^n a_i  (A^U_j \cdot x_i + b^U_j) \\
                                &\leq \sum_{i=1}^n a_i (A^L_c \cdot x_i + b^L_c) \\
                                &= (A^L_c \cdot \sum_{i=1}^n (a_i x_i) + b^L_c) \\
                                &= (A^L_c \cdot z + b^L_c) \\
                                &\leq (\mathrm{f}_{NN}(z))_c \\
        \end{split}
    \end{equation*}
    $$ $$
\end{proof}

\begin{remark}
   For some non-convex attack spaces embedded in high-dimensional pixel spaces, the convex hull of the attack space associated with an image can contain images belonging to a different class (an example of rotation is illustrated in Figure \ref{fig: convex_hull}). Thus, one cannot certify large intervals of perturbations using a single certification cycle of linear relaxation based verifiers.
\end{remark}

\begin{figure}[ht!]
\includegraphics[width=\linewidth]{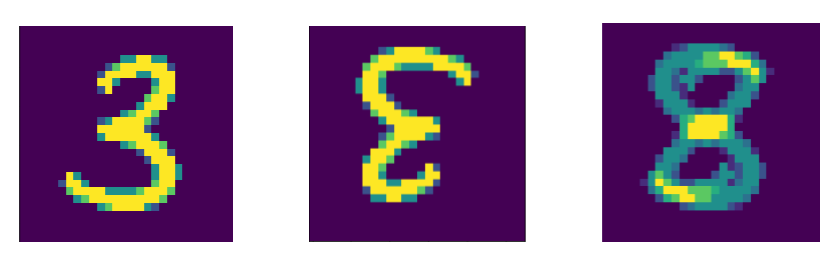}
\caption{Convex Hull in the Pixel Space}
\label{fig: convex_hull}
\end{figure}

\begin{proof} [Proof for Figure \ref{fig: convex_hull}]
    Consider the images given in Figure \ref{fig: convex_hull}, denote them as $x_1, x_2, x_3$ and $x_3 = \frac{x_1 + x_2}{2}$ by construction. We can observe that for an ideal neural network $\mathrm{f}$, we expect that $\mathrm{f}$ classifies $x_1, x_2$ as $3$ and classifies $x_3$ as $8$. Now, we claim that for this network $\mathrm{f}$, it is not possible for a linear-relaxation based verifier to verify that both $x_1, x_2$ are classified as $3$ using just one certification cycle. If it could, then we have by Theorem \ref{thm: Convex hull_2} that we would be able to verify it for the point $x_3 = \frac{x_1 + x_2}{2}$. However, we see that this is not possible as $\mathrm{f}$ classifies $x_3$ as $8$. Therefore,  we need the verification for $x_1$ and for $x_2$ to belong to different certification cycles making input-splitting necessary.
    
\end{proof}

\section{Input Space Splitting}
\label{app:implicit_splitting}

\begin{figure}[ht!]
\centering
\begin{subfigure}[t]{0.4\textwidth}
  \includegraphics[width=\linewidth]{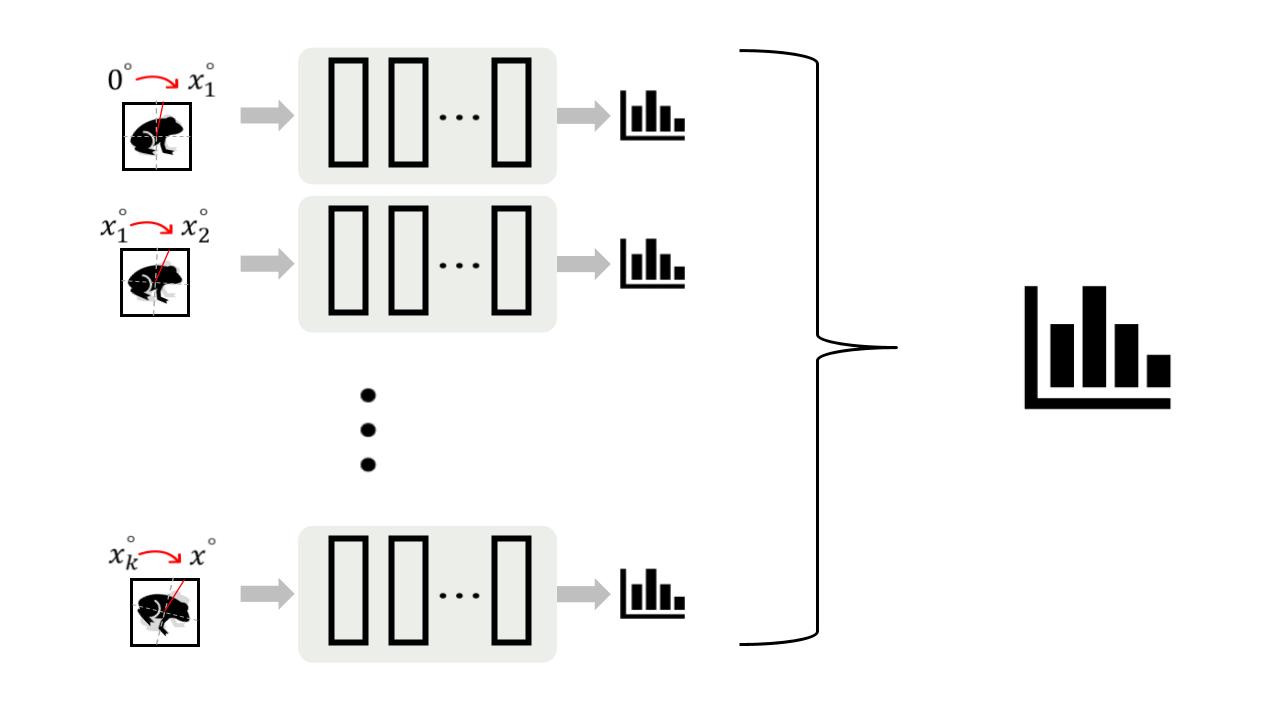}
  \caption{Explicit Splitting}
  \label{fig: explicit_cartoon}
\end{subfigure}
\hfill
\begin{subfigure}[t]{0.4\textwidth}
  \includegraphics[width=\linewidth]{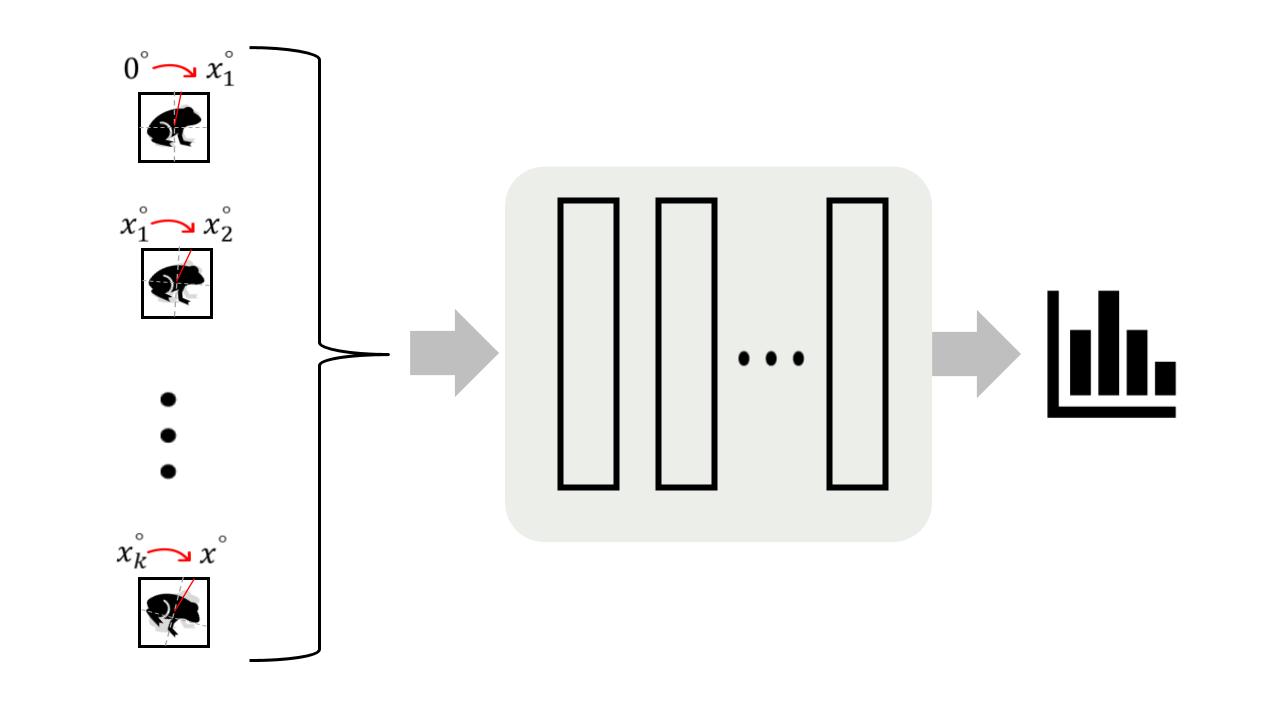}
  \caption{Implicit Splitting}
  \label{fig: implicit_cartoon}
\end{subfigure}
\captionsetup{justification=centering}
\caption{Illustration of refinement techniques.}
\label{fig: split}
\end{figure}

Figure \ref{fig: split} illustrates the difference between explicit and implicit input space splitting.
In Figure \ref{fig:awesome_image1}, we give the form of the activation function for rotation. Even in a small range of rotation angle $\theta$ ($2^\circ$), we see that the function is quite non-linear resulting in very loose linear bounds. Splitting the images explicitly into 5 parts and running them separately (i.e. explicit splitting as shown in Figure \ref{fig: explicit_split}) gives us a much tighter approximation. However, explicit splitting results in a high computation time as the time scales linearly with the number of splits. In order to efficiently approximate this function we can instead make the splits to get explicit bounds on each sub-interval and then run them through certification simultaneously (i.e. implicit splitting as shown in Figure \ref{fig: implicit_split}). As we observe in Figure \ref{fig: implicit_split}, splitting into 20 implicit parts gives a very good approximation with very little overhead (number of certification cycles used are still the same).

Table \ref{tab: implicit_effect} gives a more detailed overview of the effect of implicit splitting. For a large explicit split interval size, we see that using a lot of implicit splits allows us to certify larger radius. However, we also see a pattern that beyond a point adding more implicit splits does not give better bounds. Using implicit splits still results in a single certification cycle. By theorem \ref{thm: Convex hull_2} we see certifying this relaxation is a harder problem than certifying all the rotated images. This could explain the reason we are unable to certify big explicit interval even after using a large number of implicit splits.

\begin{figure}[ht!]
\centering
\begin{subfigure}[]{0.8\linewidth}
  \includegraphics[width=\linewidth]{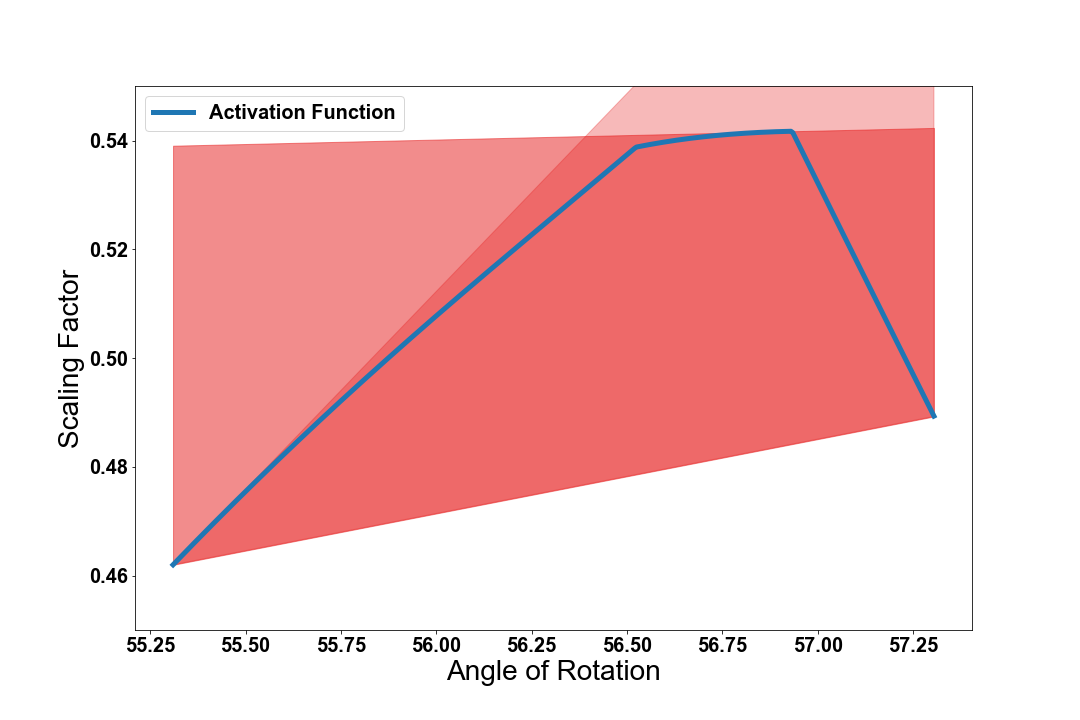}
  \caption{Without splitting the input range}\label{fig:awesome_image1}
\end{subfigure}
\hfill
\begin{subfigure}[]{0.8\linewidth}  \includegraphics[width=\linewidth]{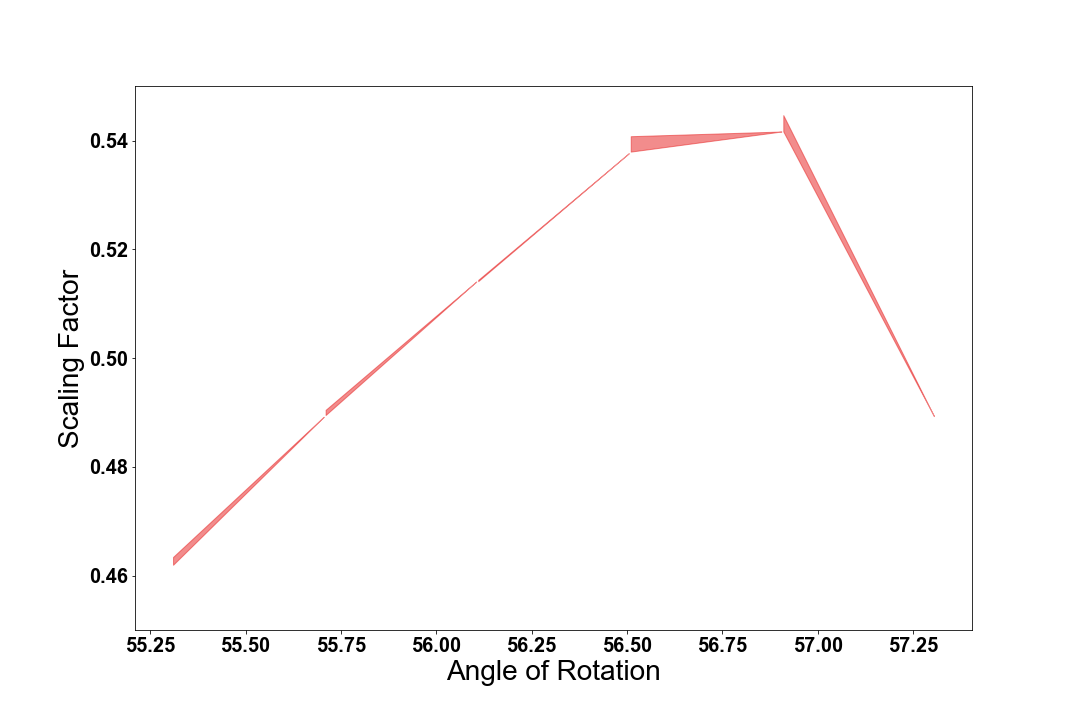}
  \caption{Explicitly splitting the input (5 divisions)}\label{fig: explicit_split}
\end{subfigure}
\hfill
\begin{subfigure}[]{0.8\linewidth} 
  \includegraphics[width=\linewidth]{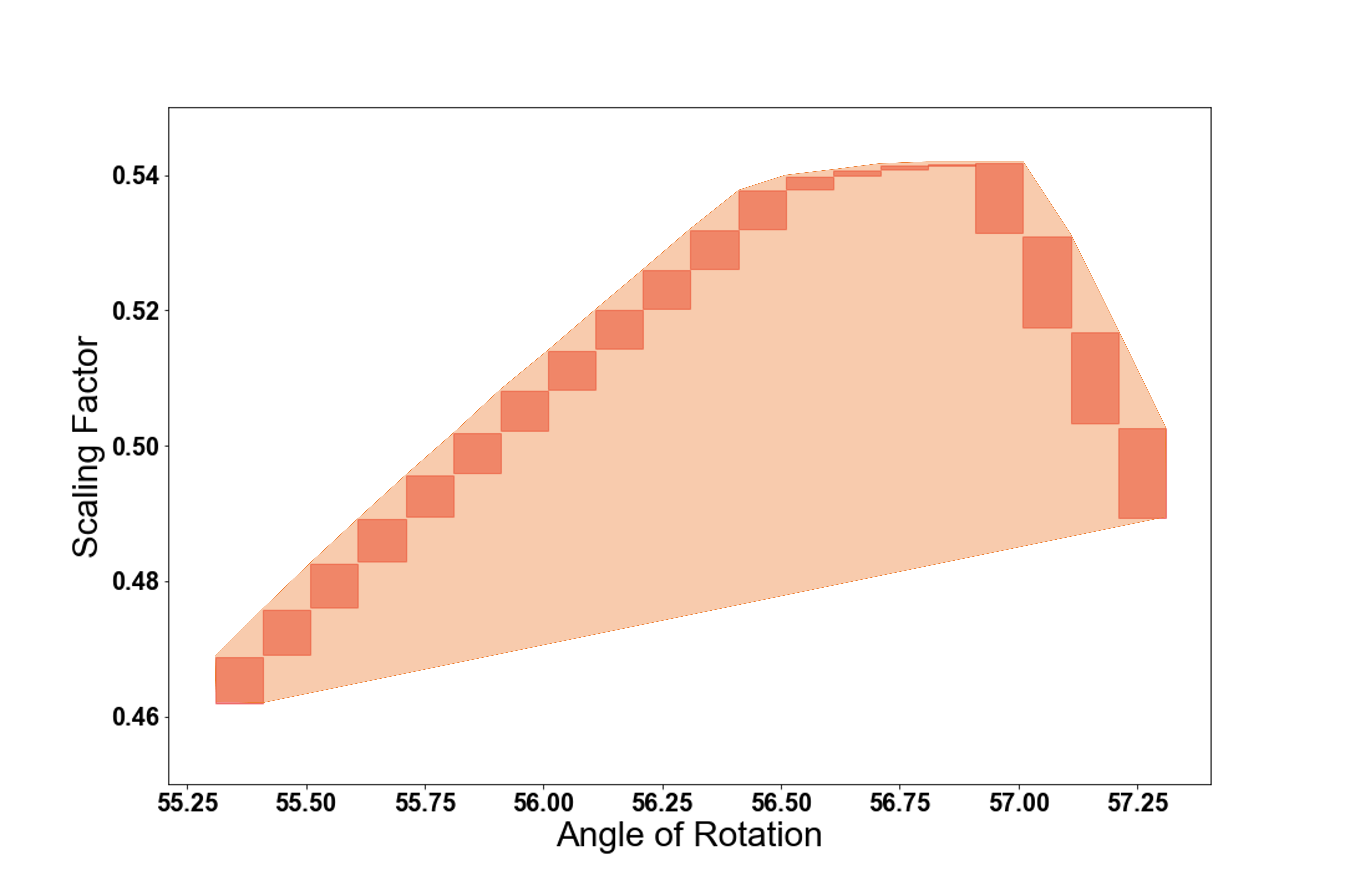}
  \caption{Implicitly splitting the input (20 divisions)}\label{fig: implicit_split}
\end{subfigure}
\captionsetup{justification=centering}
\caption{Bounds for activation function of SP layer in rotation}
\end{figure}

\begin{table}[ht!] 
 \centering
  \captionsetup{justification=centering}
  \caption{Evaluation of averaged certified bounds for rotation space perturbation on MNIST MLP $3 \times 1024$ and 10 images. The results demonstrate the effectiveness of  implicit splits.}
  \begin{adjustbox}{max width=0.8\linewidth}
    \begin{tabular}{|c|cccccc|}
    \hline
    \shortstack{Explicit Split \\ Interval Size} & \multicolumn{6}{c|}{\shortstack{Number of \\Implicit Splits}} \\
    \hline
          &1 &5 &8 &10 &15 &20\\
    \hline 
    \addlinespace[0.1em]
    \multicolumn{7}{|l|}{\bf Experiment (II): Rotations} \\
    \addlinespace[0.1em]
    \hline 
    0.3 &0.27 &50.0 &50.09 &50.12 &50.18 &50.24 \\
    0.5 & 0.0 &40.0 & 50.0 &50.0 & 50.1& 50.2 \\
    0.8 &0.0 &40.0 & 40.0 &40.1 & 50.0& 50.0 \\
    1.0 &0.0 &30.2 & 40.0 &40.0 & 50.0& 50.2 \\
    1.2 &0.0 &10.6 & 40.0 &40.0 & 40.0& 50.0 \\
    1.5 &0.0 &0.0 & 30.15 &40.0 & 40.0& 40.0 \\
    2.0 &0.0 &0.0 &0.4 &30.0 &40.0 &40.0 \\
    3.0 &0.0 &0.0 &0.0 &0.0 &0.9 &30.0 \\
    \hline
    \end{tabular}%
    \end{adjustbox}
  \label{tab: implicit_effect}%
\end{table}


\section{Additional Experimental Results}

\begin{table}[ht!]
  \centering
  \captionsetup{justification=centering}
  \caption{Additional results of Table~\ref{tab:hsl}}
  \begin{adjustbox}{max width=0.9\textwidth}
    \begin{tabular}{|l|cccc||cc||c|}
    \hline
    
    Network & \multicolumn{4}{c||}{Certified Bounds} & \multicolumn{2}{c||}{Ours Improvement (vs Weighted)}  & Attack  \\
    \hline
          &  Naive & Weighted & \bf SPL & \bf SPL + Refine & w/o refine & w/ refine  & Grid  \\
    \hline 
    \addlinespace[0.1em]
    
    \multicolumn{8}{|l|}{\bf Experiment (I)-A: Hue}\\
    \addlinespace[0.1em]
    \hline 
    CIFAR, MLP 5 $\times$ 2048  & 0.00489 &  0.041 & 0.370   & 1.119 & 8.02x  & 26.29x & 1.449   \\
    \hline
    \addlinespace[0.1em]
    \multicolumn{8}{|l|}{\bf Experiment (I)-B: Saturation}\\
    \addlinespace[0.1em]
    \hline 
    CIFAR, MLP 5 $\times$ 2048  & 0.00286 & 0.007 & 0.119   & 0.325 & 16.00x  & 45.42x & 0.346   \\
    \hline
    
    \addlinespace[0.1em]
    \multicolumn{8}{|l|}{\bf Experiment (I)-C: Lightness}\\
    \addlinespace[0.1em]
    \hline 
    CIFAR, MLP 5 $\times$ 2048  & 0.00076 & 0.001 & 0.059   & 0.261  & 58.00x  & 260.00x   & 0.276  \\
    \hline
    \end{tabular}%
    \end{adjustbox}
  \label{tab:hsl app}%
\end{table}

\begin{table}[ht!]
  \centering
  \captionsetup{justification=centering}
  \caption{Additional result of Table~\ref{tab: rotation}}
  \begin{adjustbox}{max width=\textwidth}
    \begin{tabular}{|l|ccc|c||c|}
    \hline
    
    Network & \multicolumn{4}{c||}{Certified Bounds (degrees)}  & Attack (degrees)\\
    \hline
     & \multicolumn{3}{c|}{Number of Implicit Splits}  & \bf SPL + Refine & Grid Attack \\
    \hline
          &  \shortstack{1 implicit \\ No explicit}& \shortstack{5 implicit \\ No explicit} &  \multicolumn{1}{c|}{\shortstack{10 implicit \\ No explicit}}  & \shortstack{100 implicit + \\ \newline explicit intervals of $0.5^\circ$} &\\
    \hline 
    \addlinespace[0.1em]
    \multicolumn{6}{|l|}{\bf Experiment (II): Rotations} \\
    \addlinespace[0.1em]
    \hline 
    MNIST, MLP 4$\times$ 1024   & 0.256 & 0.644 & 1.129 & 46.63  & 48.75 \\
    MNIST, MLP 3$\times$ 1024   & 0.486 & 1.177 & 1.974 & 48.47  & 49.76 \\
    MNIST, CNN 4 $\times$ 5     & 0.437 & 0.952 & 1.447 & 49.20 & 54.61 \\
    \hline
    \end{tabular}%
    \end{adjustbox}
  \label{tab: rotation app}%
\end{table}%

\end{appendices}

\end{document}